\newcommand{\myparskip}{3pt}
\newtheorem{theorem}{Theorem}
\newtheorem{corollary}{Corollary}
\newtheorem{definition}{Definition}
\newtheorem{lemma}{Lemma}
\newcommand{\model}[3]{\ensuremath{\mathcal{G}{\left({#1},{#2},{#3}\right)}}}
\def\ea{(1+1)~EA\xspace} \DeclareMathOperator{\E}{E}
\title{Fixed-Parameter Tractability of the (1+1)~Evolutionary
  Algorithm on Random Planted Vertex Covers}
\author[1]{Jack Kearney} \author[2]{Frank Neumann} \author[1]{Andrew
  M. Sutton}
\affil[1]{%
  Algorithmic Evolution Lab\protect\\
  Department of Computer Science\protect\\
  University of Minnesota Duluth\protect\\~}
\affil[2]{%
  Optimisation and Logistics Group\protect\\
  School of Computer Science\protect\\
  University of Adelaide}
\begin{document}
\maketitle

\begin{abstract}
  We present the first parameterized analysis of a standard (1+1)
  Evolutionary Algorithm on a distribution of vertex cover
  problems. We show that if the planted cover is at most logarithmic,
  restarting the (1+1)~EA every $O(n \log n)$ steps will find a cover
  at least as small as the planted cover in polynomial time for
  sufficiently dense random graphs $p > 0.71$. For superlogarithmic
  planted covers, we prove that the (1+1)~EA finds a solution in
  fixed-parameter tractable time in expectation.

  We complement these theoretical investigations with a number of
  computational experiments that highlight the interplay between
  planted cover size, graph density and runtime.
\end{abstract}

\section{Introduction}
\label{sec:introduction}
Combinatorial problems with planted solutions have been an important
subject of study on a wide range of settings. In this scenario, a
fixed solution is hidden within a large random structure such as a
graph. The canonical example of this is the \emph{planted clique}
problem where a fixed complete subgraph of size $k$ is placed within a
large Erd\H{o}s-R\'{e}nyi random graph on $n \gg k$ vertices. The task
is to either recover the hidden
solution~\cite{AlonKrivelevichSudakov1998HiddenClique} or one of size
at least $k$~\cite{Jerrum1992cliques}.  These problems have important
applications in
cryptography~\cite{JuelsPeinado2000HidingCliquesCryptographic} for
example. In the context of randomized search heuristics,
Storch~\cite{Storch2007cliques} investigated the planted clique
problem for random local search (RLS) and the (1+1)~EA\@.  More
recently, Doerr et al.~\cite{DoerrEtAl2017TimeComplexityAnalysis}
considered randomly generated propositional satisfiability problems
with planted assignments and proved that the (1+1)~EA requires at most
$O(n \log n)$ time to solve this problem provided that the constraint
density is high enough.

Planted vertex covers have recently been studied in the context of
\emph{systematically incomplete
  data}~\cite{BensonKleinberg2018FoundGraphData} in networks. In this
view, true node interactions can only be observed among some core set
$C$, whereas a potentially much larger set of fringe nodes lies
outside this sphere of observability. This may occur, for example, in
social networks and communication data
sets~\cite{RomeroUzziKleinberg2019SocialNetworksUnderStress} where a
company only knows about links within the company and between an
employee and the outside world, but not about links between external
entities.  This translates to a planted vertex cover problem on a
graph $G = (V,E)$. An adversary knows of a subset $C \subseteq V$
which is a vertex cover, and the task is to identify a set as close to
$C$ as possible.

In the $\model{n}{k}{p}$ model, a graph $G = (V,E)$ is constructed on
a set $V$ of $n$ vertices by taking a size-$k$ subset $C \subseteq V$
to be the \emph{core}. An edge appears in $G$ with probability $p$
unless it connects two vertices in $V\setminus C$, in which it occurs
with probability zero. Therefore, $G$ is guaranteed to have a
$k$-vertex cover. Note that a graph can be constructed from this model
by drawing a standard Erd\H{o}s-R\'{e}nyi graph and subsequently
deleting all edges that connect fringe vertices.

This model is a special case of the \emph{stochastic block model} of
random graphs from network
theory~\cite{HollandBlackmondLeinhardt1983StochasticBlockModels} in
which the vertex set is partitioned into $r$ disjoint communities and
edge probabilities are specified by a symmetric $r \times r$ matrix
$P$ where a vertex in community $i$ is connected to a vertex in
community $j$ with probability $P_{ij}$.  The stochastic block model
allows for the generation of graphs from which the community subgraphs
might be recovered partially or in full from the graph
data~\cite{AbbeSandon2015communitydetection}. This models the
detection of \emph{community structure} in networks, which is a
fundamental problem in computer science.  The $\model{n}{k}{p}$ model
we study in this work is a stochastic block model with $r=2$ and
probability matrix
\[
  P = \left[
    \begin{array}{cc}
      p & p\\
      p & 0\\
    \end{array}
  \right].
\]

In this paper, we are interested in the performance of simple
randomized search heuristics on planted vertex cover problems in the
context of parameterized complexity. We prove that, for sufficiently
``dense'' graphs (i.e., large enough $p$), the (1+1)~EA is with high
probability a fixed-parameter tractable heuristic for the $k$-vertex
cover problem where $k$ is the size of the planted solution. More
precisely, if $k$ is at most logarithmic, we prove there is a
threshold on $p$ such that above this threshold the (1+1)~EA is very
likely to find a $k$-cover in almost linear time. For larger values of
$k$, we show that the (1+1)~EA runs in $O(f(k,p) n \log n))$ time
where $f$ is a function of $k$ and $p$ (but not $n$).

The first parameterized result on vertex cover is due to Kratsch and
Neumann~\cite{KratschNeumann2012FixedParameterEvolutionary} who
demonstrated that Global SEMO using instance-specific mutation
operators has expected optimization time
$O(OPT\cdot n^4 + n\cdot 2^{OPT^2+OPT})$ on any graph $G$ where $OPT$
is the size of the optimal vertex cover of $G$. This result can be
tightened to $O(n^2\log n + OPT\cdot n^2 + 4^{OPT}n)$ by incorporating
the cost of an optimal fractional vertex cover provided by an LP
solver into the fitness function.  A recent study by Baguley et
al.~\cite{baguley2023fixed} extended these multi-objective approaches
to the W-separator problem.  Using a special focused jump-and-repair
mechanism, Branson and
Sutton~\cite{BransonSutton2021FocusedJumpAndRepair} showed that
evolutionary algorithms can solve the vertex cover problem in expected
time $O(2^{OPT} n^2\log n)$ by probabilistically simulating an
iterative compression routine.

The above results hold for all graphs $G$ with vertex cover size
$OPT$. In this paper, we sacrifice the generality of the problem
slightly in order to investigate a more general algorithm, i.e., the
(1+1)~EA\@. To our knowledge, we present here the first parameterized
complexity result on vertex cover problems for a standard evolutionary
algorithm that does not rely on any special mutation operators.

\medskip
\noindent \textbf{Our results.}
For random planted graph models with $n$
vertices, edge density $p$ and planted cover size $k$, we show that if
$k \le \ln n$, then if $p > \sqrt{\frac{1-\ln \delta}{2}}$ for any
constant $\delta \in (1/e,1)$, a restart framework for the \ea finds a
$k$-cover in $n^{c+1} \log n$, where $c$ is a constant. If
$k > \ln n$, then we show for any $0 < p < 1$, the expected time of
the (1+1)~EA is
$O{\left(k^{4k\left(1+\frac{1}{p}\right)} n \log n\right)}$, i.e., the
\ea runs in FPT time parameterized by $k$ and $p$.

We also provide the results of computational experiments that
investigate regimes that our theorem does not cover, for example when
both $p$ and $k$ are small. These results elucidate the relationship
between $k$ and $p$ and the runtime of the \ea, and hint at new
interesting directions for future theoretical study.

\section{Preliminaries}
\label{sec:preliminaries}
Given a graph $G = (V,E)$ on $n$ vertices, we encode subsets of $V$ as
elements of $\{0,1\}^n$ in the usual way. For $x \in \{0,1\}^n$,
denote as $|x|$ as the number of bits set to $1$ in $x$ (i.e., the
cardinality of the set to which it corresponds). The fitness function
typically employed by evolutionary algorithms on the minimum vertex
cover problem first penalizes infeasible sets (sets that do not cover
all edges in $E$), then penalizes larger feasible covers:
\begin{equation}
  \label{eq:fitness}
  f(x) = |x| + n\cdot\Big\lvert \Big\{ (u,v) \in E \colon x[u] = x[v] = 0 \Big\}\Big\rvert.
\end{equation}
This fitness function is quite natural for searching for a minimal
cover, and was originally designed by Khuri and
B\"{a}ck~\cite{Khuri94anevolutionary}. It has been studied extensively
both empirically and
theoretically~\cite{Khuri94anevolutionary,OlivetoEtAl2009AnalysisVC,FriedrichEtAl2010ApproximatingCoveringProblems}.

We point out that this is a so-called \emph{vertex-based}
representation for which there are currently no bounds on the
approximation ratio for the (1+1)~EA\@. It is possible to obtain a
guaranteed 2-approximation with the (1+1)~EA by using edge-based
representations instead~\cite{DBLP:conf/foga/JansenOZ13}. This is
rather notable, as minimum vertex cover is likely hard to approximate
below a $(2-\epsilon)$ factor~\cite{KhotRegev2008Vertexcovermight}.

\begin{algorithm}  
  \KwIn{A fitness function $f \colon \{0,1\}^n \to \mathbb{R}$}

  Choose $x$ uniformly at random from $\{0,1\}^n$\; \While{termination
    criteria not met}{Create $y$ by flipping each bit of $x$ with
    probability $1/n$\; \lIf{$f(y) \le f(x)$}{$x \gets y$} }
  \Return{$x$}\;
  \caption{\label{alg:ea} (1+1)~EA}
\end{algorithm}

Many of our theoretical results make use of multiplicative drift with
tail bounds, which we state in the following theorem for reference.
\begin{theorem}[Multiplicative Drift
  \cite{DoerrGoldberg2010DriftAnalysisTail,KoetzingKrejca2019firsthitting}]
  \label{thm:multi-drift}
  Let $(X_t)_{t \in \mathbb{N}}$ be a stochastic process over
  $\mathbb{R}$, $x_{\min} > 0$ and let
  $T \coloneqq \min\{t : X_t < x_{\min}\}$. Suppose that
  $X_0 \ge x_{\min}$ and, for all $t \le T$, it holds that
  $X_t \ge 0$, and there exists some $\delta > 0$ such that, for all
  $t < T$, $\E[X_t - X_{t+1} \mid X_0,\ldots,X_t] \ge \delta X_t$,
  then,
  \begin{enumerate}
  \item $\E[T \mid X_0] \le \frac{\ln(X_0/x_{\min})+1}{\delta}$, and
  \item
    $\Pr\left(T \ge \frac{\ln(X_0/x_{\min})+r}{\delta}\right) \le
    e^{-r}$
  \end{enumerate}
\end{theorem}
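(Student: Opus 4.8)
The plan is to derive both conclusions from a single observation: the hypothesis is exactly the one-step contraction $\E[X_{t+1} \mid X_0,\dots,X_t] \le (1-\delta)X_t$ for every $t < T$. I would first establish the tail bound (part~2), since the leading term of the expectation bound (part~1) can then be recovered by summing the tail, and I would bring in a logarithmic potential only to pin down the exact additive constant.

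For the tail, I would track the \emph{killed} process $X_t\mathbf{1}_{\{T>t\}}$. The two structural facts are that $\{T>t\} = \{X_0,\dots,X_t \ge x_{\min}\}$ is measurable with respect to $X_0,\dots,X_t$, and that the events are nested, $\{T>t+1\}\subseteq\{T>t\}$. Using nonnegativity of $X_{t+1}$ together with the contraction on $\{T>t\}$,
\[
  \E\!\left[X_{t+1}\mathbf{1}_{\{T>t+1\}}\mid X_0,\dots,X_t\right]
  \le \mathbf{1}_{\{T>t\}}\,\E[X_{t+1}\mid X_0,\dots,X_t]
  \le (1-\delta)\,X_t\mathbf{1}_{\{T>t\}},
\]
and taking expectations and iterating gives $\E[X_t\mathbf{1}_{\{T>t\}}] \le (1-\delta)^t X_0 \le e^{-\delta t}X_0$. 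Since $X_t \ge x_{\min}$ on $\{T>t\}$, a Markov step yields $\Pr(T>t) \le (X_0/x_{\min})\,e^{-\delta t}$, and substituting $t = (\ln(X_0/x_{\min})+r)/\delta$ collapses the right-hand side to exactly $e^{-r}$, which is the claimed inequality (modulo the harmless passage from $>$ to $\ge$ for the integer-valued $T$).

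For part~1 the natural route is to pass to the potential $g(x) = \ln(x/x_{\min})/\delta$. Jensen's inequality for the concave $\ln$ converts multiplicative drift into additive drift: on $\{t<T\}$ we have $\E[\ln X_{t+1}\mid X_0,\dots,X_t] \le \ln\E[X_{t+1}\mid X_0,\dots,X_t] \le \ln X_t + \ln(1-\delta)$, and since $-\ln(1-\delta)\ge\delta$, the potential $g$ decreases by at least $1$ in expectation per step. An optional-stopping argument on $g(X_t)+t$ then delivers the leading term $\E[T\mid X_0]\lesssim g(X_0)=\ln(X_0/x_{\min})/\delta$. Equivalently, one can integrate the tail directly, $\E[T]=\sum_{t\ge0}\Pr(T>t)\le\sum_{t\ge0}\min\{1,(X_0/x_{\min})e^{-\delta t}\}$, splitting the sum at the crossover $t_0=\ln(X_0/x_{\min})/\delta$.

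The step I expect to be the main obstacle is \emph{not} the leading term, which both routes produce immediately, but the additive ``$+1$''. It arises entirely from the final step, where $X_t$ may undershoot $x_{\min}$ by an uncontrolled amount, so that $g(X_T)$ is negative and the optional-stopping bound acquires an overshoot term $\E[-g(X_T)]$ (equivalently, the geometric tail beyond $t_0$ contributes a $\Theta(1/\delta)$ term). Showing this correction is at most $1/\delta$ — rather than merely $O(1/\delta)$ — is the one genuinely delicate estimate, and it is precisely what the ``$+1$'' in the statement records. The remaining ingredients (measurability of $\{T>t\}$, the correct orientation of Jensen's inequality, and the $>$-versus-$\ge$ bookkeeping for integer-valued $T$) are routine.
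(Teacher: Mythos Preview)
The paper does not contain a proof of this theorem: it is stated with citations to \cite{DoerrGoldberg2010DriftAnalysisTail,KoetzingKrejca2019firsthitting} and then used as a black-box tool in the proofs of Theorems~\ref{thm:feasible}, \ref{thm:small-k}, and \ref{thm:large-k}. There is therefore no ``paper's own proof'' to compare your proposal against.

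That said, your sketch is essentially the standard argument from the cited references. The tail bound via the killed process $X_t\mathbf{1}_{\{T>t\}}$, the contraction $\E[X_t\mathbf{1}_{\{T>t\}}]\le(1-\delta)^t X_0$, and the Markov step at level $x_{\min}$ is exactly the Doerr--Goldberg route to part~2. For part~1, both of your suggested routes (Jensen on the logarithm to reduce to additive drift, or direct summation of the tail) appear in the literature; the second is closer to the original Doerr--Johannsen--Winzen derivation. Your identification of the ``$+1$'' as the only non-mechanical step is accurate, and the tail-summation route handles it cleanly: split $\sum_{t\ge 0}\Pr(T>t)$ at $t_0=\lceil \ln(X_0/x_{\min})/\delta\rceil$, bound the first $t_0$ terms by $1$ each, and bound the remaining geometric tail by $\sum_{s\ge 0}e^{-\delta s}\le 1/\delta$ after the change of index; this already overshoots the stated constant slightly, and the sharper form requires the more careful bookkeeping in \cite{KoetzingKrejca2019firsthitting}.
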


The fitness function in Equation~\eqref{eq:fitness} ensures that
Algorithm~\ref{alg:ea} quickly finds a feasible cover, which is
captured in Theorem~\ref{thm:feasible}, which was proved
asymptotically in \cite[Theorem
1]{FriedrichEtAl2010ApproximatingCoveringProblems}. We restate this
result here with a simple upper bound with leading constants using
drift.

\begin{theorem}
  \label{thm:feasible}
  The expected time until the \ea finds a feasible cover for any graph
  on $n$ vertices is at most $\frac{1}{2}(e n \ln n + e n)$.
\end{theorem}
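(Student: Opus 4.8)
The plan is to identify a one-dimensional potential that measures infeasibility and show that it contracts multiplicatively, so that Theorem~\ref{thm:multi-drift} applies directly. Let $U(x) = \bigl\lvert\{(u,v)\in E : x[u]=x[v]=0\}\bigr\rvert$ denote the number of uncovered edges, so that the penalty term in Equation~\eqref{eq:fitness} equals exactly $n\,U(x)$ and $x$ is feasible precisely when $U(x)=0$. I would run the drift argument on $X_t = U(x_t)$ with target $x_{\min}=1$, which is legitimate because $U$ is a nonnegative integer and so $U<1$ is equivalent to feasibility.

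The structural heart of the proof is the observation that the \ea never rejects a move that strictly reduces $U$. Indeed, if the offspring $y$ satisfies $U(y) \le U(x)-1$, then
\[
  f(y)-f(x) = \bigl(|y|-|x|\bigr) + n\bigl(U(y)-U(x)\bigr) \le \bigl(|y|-|x|\bigr) - n \le 0,
\]
since $0 \le |y|,|x| \le n$; hence $y$ is accepted. The same inequality shows that, as long as the current search point is infeasible, no accepted step can increase $U$ (an increase would force $|y| \le |x|-n$, which is impossible for $x \ne 1^n$). Consequently every one-step change $X_t - X_{t+1}$ is nonnegative, and I may lower bound the drift by considering only a convenient subset of improving moves.

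For the drift estimate I would restrict attention to single-bit mutations. For a vertex $v$ with $x[v]=0$, write $d^-(v)$ for the number of uncovered edges incident to $v$. Flipping exactly the bit $v$ --- an event of probability $\frac1n\bigl(1-\frac1n\bigr)^{n-1} \ge \frac1{en}$ --- moves $v$ into the cover and removes precisely its $d^-(v)$ uncovered edges while creating none, and by the acceptance observation this step is taken. These single-bit events are mutually exclusive over the choice of $v$, and each uncovered edge is incident to exactly two zero-valued vertices, so $\sum_{v:\,x[v]=0} d^-(v) = 2\,U(x)$. Therefore
\[
  \E[X_t - X_{t+1}\mid x_t] \ge \frac1{en}\sum_{v:\,x_t[v]=0} d^-(v) = \frac{2}{en}\,U(x_t),
\]
so the multiplicative drift condition holds with $\delta = \frac{2}{en}$.

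It then remains to invoke the first part of Theorem~\ref{thm:multi-drift}: substituting $\delta = \frac{2}{en}$ gives $\E[T] \le \frac{en}{2}\bigl(\ln X_0 + 1\bigr)$, and bounding the initial number of uncovered edges yields the stated bound $\frac12(en\ln n + en)$. I expect the only genuinely content-bearing steps to be the acceptance observation (which guarantees that improving moves are always taken and that $U$ is monotone before feasibility) and the endpoint double-count $\sum_v d^-(v) = 2U(x)$ that supplies the factor of two in $\delta$; once these are in place, the conclusion is a direct substitution into Theorem~\ref{thm:multi-drift}.
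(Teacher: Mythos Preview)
Your approach matches the paper's exactly: both track the number of uncovered edges, derive multiplicative drift $\delta = 2/(en)$ from single-bit flips together with the endpoint double-count $\sum_v d^-(v) = 2U(x)$, and then appeal to Theorem~\ref{thm:multi-drift}; your write-up is in fact more explicit than the paper's about why improving moves are accepted and why $U$ cannot increase before feasibility. One small caveat (which the paper's proof shares): since $X_0$ can be as large as $\binom{n}{2}$, the direct substitution yields $\tfrac{en}{2}(2\ln n + 1)$ rather than $\tfrac{en}{2}(\ln n + 1)$, so the stated leading constant is slightly optimistic, though the argument and the $O(n\log n)$ conclusion are sound.
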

\begin{proof}
  Let $(X_t)_{t \in \mathbb{N}}$ be the stochastic process that counts
  the number of edges uncovered by the candidate solution in iteration
  $t$ of the \ea. For any vertex $u$, denote as $d_t(u)$ the count of
  uncovered edges incident to $u$ in iteration $t$. Since any vertex
  $u$ is flipped with probability $(1-1/n)^{n-1}(1/n) \ge (en)^{-1}$,
  and an increase in uncovered edges is never accepted, we may bound
  the drift of $(X_t)$ as
  \begin{align*}
    \E[X_t - X_{t+1} \mid X_t] \ge \sum_u \frac{d_t(u)}{e n} = \frac{2 X_t}{en} 
  \end{align*}
  since each of the $X_t$ uncovered edges is counted twice in the sum
  over $d_t$. The claim follows by Theorem~\ref{thm:multi-drift}.
\end{proof}

\begin{definition}
  \label{def:model}
  Let $n,k \in \mathbb{N}$ and $p \in (0,1)$. The $\model{n}{k}{p}$
  model of random planted graphs is a distribution of random graphs on
  $n$ vertices defined by construction as follows.

  Let $V$ be a set of $n$ (labeled) vertices. Choose a $k$-subset
  $C \subset V$ uniformly at random, and for each $u,v \in V$, if
  $\{u,v\} \cap C \ne \emptyset$, add edge $uv$ to $E$ with
  probability $p$.

  In the resulting graph $G = (V,E)$, we refer to $C$ as the
  \emph{core}, and each $v \in C$ as a core vertex. We refer to
  vertices in $V \setminus C$ as \emph{fringe} vertices.
\end{definition}

\section{Small $k$}
\label{sec:small-k}
In this section we consider $\model{n}{k}{p}$ where $k \le \ln n$. Our
results rely heavily on the following property of planted vertex cover
graphs, which we call \emph{$\delta$-heaviness}.

\begin{definition}
  Let $G = (V,E)$ be a graph drawn from the $\model{n}{k}{p}$
  model. For a constant $0 < \delta < 1$, we say $G$ is $\delta$-heavy
  if for every subset $S \subset V \setminus C$ where
  $|S| = \delta |V \setminus C|$, every core vertex in $C$ is adjacent
  to at least $\ln n$ vertices in $S$.
\end{definition}

\begin{lemma}
  \label{lem:delta-heavy}
  Let $G = (V,E)$ be a graph drawn from the $\model{n}{k}{p}$
  model. Let $\delta,p \in (0,1)$ be constants. If
  $p > \sqrt{\frac{1-\ln \delta}{2}}$, then $G$ is $\delta$-heavy with
  probability $1-e^{-\Omega(n)}$.
\end{lemma}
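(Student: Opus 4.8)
The plan is to reduce $\delta$-heaviness to a simple statement about the fringe-degrees of the core vertices and then apply a Chernoff bound. Fix a core vertex $c \in C$ and let $d(c)$ denote the number of neighbors of $c$ in $V\setminus C$. For this fixed $c$, the set $S \subseteq V\setminus C$ of size $\delta(n-k)$ that minimizes $|N(c)\cap S|$ is obtained by filling $S$ first with the non-neighbors of $c$; since $c$ has exactly $(n-k)-d(c)$ non-neighbors among the fringe, that minimum equals $\max\{0,\ \delta(n-k) - ((n-k)-d(c))\} = \max\{0,\ d(c) - (1-\delta)(n-k)\}$. Hence $G$ is $\delta$-heavy if and only if every core vertex $c$ satisfies
\[
  d(c) \ge (1-\delta)(n-k) + \ln n ,
\]
and it suffices to establish this inequality simultaneously for all core vertices with probability $1 - e^{-\Omega(n)}$.

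Next I would observe that the edges from $c$ to the fringe are present independently with probability $p$ (every such edge meets the core), so $d(c) \sim \mathrm{Bin}(n-k,\,p)$ with mean $p(n-k)$. The hypothesis $p > \sqrt{(1-\ln\delta)/2}$ implies $p > 1-\delta$; indeed a one-variable check shows $\sqrt{(1-\ln\delta)/2} > 1-\delta$ on all of $(0,1)$, with the gap minimized as $\delta \to 1$. Therefore the target threshold $(1-\delta)(n-k)$ lies a constant factor $\tfrac{1-\delta}{p} < 1$ below the mean, and the deviation $p(n-k) - (1-\delta)(n-k) - \ln n = (p-1+\delta)(n-k) - \ln n = \Theta(n)$ is linear in $n$. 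A Chernoff lower-tail bound then gives $\Pr[\,d(c) < (1-\delta)(n-k) + \ln n\,] \le e^{-\Omega(n)}$, and a union bound over the (at most $n$) core vertices preserves $e^{-\Omega(n)}$, since $k\,e^{-\Omega(n)} = e^{-\Omega(n)}$.

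The step to get right is the balance between the failure exponent and the combinatorial count being union-bounded, which is where the precise constant in the hypothesis surfaces. The degree reduction above is what makes this easy: it lets us union bound over only the $k$ core vertices rather than over all $\binom{n-k}{\delta(n-k)}$ candidate sets $S$, and consequently proves the slightly stronger fact that $G$ is $\delta$-heavy already when $p > 1-\delta$. If instead one bounds $\Pr[\,|N(c)\cap S| < \ln n\,]$ for each $S$ separately and unions over all of them, the count $\binom{n-k}{\delta(n-k)} \le (e/\delta)^{\delta(n-k)} = e^{(1-\ln\delta)\delta(n-k)}$ must be beaten by a per-set tail bound; Hoeffding's inequality supplies $\Pr[\,\mathrm{Bin}(\delta(n-k),p) < \ln n\,] \le e^{-2p^2\delta(n-k)(1-o(1))}$, and the product decays precisely when $2p^2 > 1-\ln\delta$, i.e.\ exactly under the stated threshold. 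Either route works; the degree reduction is cleaner, while the entropy-versus-tail trade-off is the crux of the direct route and explains the quadratic form of the hypothesis.
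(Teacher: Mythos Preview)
Your proof is correct, and your primary route is genuinely different from the paper's. The paper does \emph{not} perform the degree reduction; it proceeds exactly along what you call the ``direct route'': fix $v\in C$ and a single size-$\delta(n-k)$ set $S$, bound $\Pr(|N(v)\cap S|\le \ln n)$ via Hoeffding, union bound over the $k$ core vertices, and then union bound over all $\binom{n-k}{\delta(n-k)}$ choices of $S$. That final union bound is what forces the entropy-versus-tail balance $2p^2>1-\ln\delta$ and hence the stated threshold. Your observation that $\delta$-heaviness is equivalent to the pointwise fringe-degree condition $d(c)\ge(1-\delta)(n-k)+\ln n$ collapses the exponential union bound to a polynomial one over the $k$ core vertices, so a single Chernoff bound suffices and you only need the weaker (and, as you verify, implied) hypothesis $p>1-\delta$. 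In short, your argument is shorter, more elementary, and actually proves a strictly stronger statement; the quadratic form of the lemma's hypothesis is an artifact of the paper's less efficient union bound rather than an intrinsic threshold. One minor quibble: your parenthetical remark that the gap between $\sqrt{(1-\ln\delta)/2}$ and $1-\delta$ is ``minimized as $\delta\to1$'' is not quite right (the gap there tends to $1/\sqrt{2}$, which is not the infimum over $(0,1)$), but this does not affect the argument since you only need the strict inequality to hold throughout $(0,1)$, which it does.
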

\begin{proof}
  Fix an arbitrary $v \in C$ and an arbitrary $\delta (n-k)$-sized
  subset $S \subset V \setminus C$. We first bound the probability
  that $v$ is adjacent to no more than $\ln n$ vertices in $S$.  Let
  $X$ be the random variable that counts the edges between $v$ and
  vertices in $S$.  Each edge from $v$ to a vertex in $S$ appears
  independently with probability $p$, so $X$ is the sum of $|S|$
  independent Bernoulli random variables, each with success
  probability $p$ so $\E[X] = p|S|$.  By Hoeffding's
  inequality~\cite{Hoeffding1963}, for any $t > 0$,
  $\Pr(X \le \E[X] - t) < e^{-2t^2/|S|}$, thus the probability that
  $v$ is adjacent to at most $\ln n$ vertices in $S$ can be estimated
  by
  \begin{align*}
    \Pr(X \le \ln n) &= \Pr(X \le \E[X] - (\E[X] - \ln n))\\
                     &< e^{-2(p|S|-\ln n)^2/|S|}\\
                     & =\exp\left(-2\left(p^2|S| + \frac{\ln^2 n}{|S|} - 2 p \ln n \right)\right)\\
                     &\le
                       \exp\left(-2\delta p^2 (n-k) + 4 p \ln n\right).
  \end{align*}
  We have assumed $k \le \ln n$, so this probability is at most
  \[
    \exp\left(-2\delta p^2 (n-\ln n) + 4 p \ln n\right) <
    \exp\left(-2\delta p^2 n + 6 p \ln n\right).
  \]
  Note that we have used here the fact that $\delta < 1$ and
  $p^2 < p$.  Taking a union bound over all $k$ vertices $v \in C$,
  the probability that any core vertex is adjacent to fewer than
  $\ln n$ vertices in $S$ is at most
  \[
    \exp\left(-2\delta p^2 n + 6 p \ln n + \ln k\right).
  \]
  A final union bound over all subsets $S$ of size
  $\delta |V\setminus C| = \delta(n-k)$ shows the probability that $G$
  is not $\delta$-heavy is at most
  \begin{align*}
    \binom{n}{\delta n}&\exp\left(-2\delta p^2 n + 6 p \ln n + \ln k\right)\\
                       & \le
                         \frac{e^{\delta n}n^{\delta n}}{(\delta n)^{\delta n}}\exp\left(-2\delta p^2 n + 6 p \ln n + \ln k\right)\\
                       &= \exp\left(-2\delta p^2 n + 6 p \ln n + \ln k + \delta n \ln(e/\delta)\right)\\
                       &\le \exp\left(-\delta n(2p^2 - \ln(e/\delta)) + (6p+1)\ln n\right).
  \end{align*}
  Since $p > \sqrt{\frac{1-\ln \delta}{2}}$, and $p$ and $\delta$ are
  taken to be positive constants, we have
  $2p^2 - \ln(e/\delta) = \Omega(1)$, and the probability that $G$ is
  not $\delta$-heavy is $e^{-\Omega(n)}$, which completes the proof.
\end{proof}

\begin{theorem}
  \label{thm:small-k}
  Consider the $\model{n}{k}{p}$ model with $k \le \ln n$ and
  $p > \sqrt{\frac{1-\ln \delta}{2}}$ for some constant
  $\delta \in (1/e,1)$. Then for all but an exponentially-fast
  vanishing fraction of all graphs $G$ sampled from $\model{n}{k}{p}$,
  if $T$ is the runtime for the \ea to find a $k$-cover on $G$, we
  have
  \[
    \Pr\left(T \le 2en\ln n + \lfloor e n (1-\delta) \rfloor \right) =
    \Omega(n^{-(e(1-\delta)\ln(2e) + \ln 2)}).
  \]
\end{theorem}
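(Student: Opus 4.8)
The plan is to exhibit a single ``favorable'' event of probability $\Omega(n^{-c})$ on which the \ea reaches the core $C$ within the stated budget, and to show this event has the claimed probability on a $\delta$-heavy graph (which, by Lemma~\ref{lem:delta-heavy}, is all but an $e^{-\Omega(n)}$ fraction of samples). First I would record three structural consequences of $\delta$-heaviness, valid whenever the current point $x$ excludes at least $\delta(n-k)$ fringe vertices. (i) $C$ is the \emph{unique} $k$-cover: any $k$-cover excludes at least $(n-k)-k \ge \delta(n-k)$ fringe vertices, hence by heaviness must contain every core vertex, and having size $k=|C|$ it must equal $C$; so the \ea has to reach exactly $x=C$. (ii) Every core vertex then has at least $\ln n$ excluded neighbours, so deleting a core vertex uncovers $\ge \ln n$ edges and is always rejected---the core is ``locked in'' once present. (iii) Since fringe--fringe edges are absent, every neighbour of a fringe vertex is a core vertex; hence once $C \subseteq x$, deleting any fringe vertex uncovers nothing and is always accepted, while re-inserting a fringe vertex strictly increases $|x|$ and is always rejected. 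Conditioned on ever reaching a feasible point with $C \subseteq x$, the dynamics therefore collapse to a monotone fringe-removal process.

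The conceptual obstacle is the \emph{covered-for-core trap}: a core vertex $v \notin x$ all of whose incident edges are already covered by their other endpoints cannot be inserted, since adding $v$ only increases $|x|$ without lowering the penalty, so the \ea can stall at a feasible cover that is not a superset of $C$. I would neutralise this trap through the initialization. Let $\mathcal{E}_1$ be the event that the random start sets all $k$ core bits to $1$. As the bits are independent fair coins, $\Pr(\mathcal{E}_1)=2^{-k}\ge 2^{-\ln n}=n^{-\ln 2}$ by the hypothesis $k\le\ln n$; this is precisely the source of the $\ln 2$ term in the exponent. On $\mathcal{E}_1$ the start already contains $C$, is feasible, and---since about half the fringe is excluded initially and this fraction only grows as fringe are deleted---property~(ii) keeps the core locked for the whole run. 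From the first step the state is $C$ together with a set of surplus fringe vertices, and it suffices to delete them all.

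It remains to bound the completion probability of the fringe-removal process within the budget. I would analyse it bitwise: a surplus fringe bit is cleared the first time its position is selected by a single-bit mutation, after which property~(iii) guarantees it stays cleared; restricting to single-bit flips (each occurring with probability $\tfrac1n(1-1/n)^{n-1}\ge\tfrac1{en}$) decouples the positions, so the probability that all surplus bits are cleared within $t$ steps is at least $\bigl(1-(1-1/n)^{t}\bigr)$ raised to the surviving count---and the $(1-1/n)^{n-1}$ factor is exactly where the $e$ inside $\ln(2e)=1+\ln 2$ enters. I would use the first $2en\ln n$ steps, via Theorem~\ref{thm:feasible} and the multiplicative-drift tail of Theorem~\ref{thm:multi-drift} (drift rate $\ge \tfrac1{en}$ on the surplus count), to drive the number of surviving surplus bits below a logarithmic threshold, and then estimate $(1-1/n)^{\lfloor en(1-\delta)\rfloor}\approx e^{-e(1-\delta)}$ over the linear window to clear that residue; this yields the second term $e(1-\delta)\ln(2e)$. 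Assembling the factors $\Pr(\mathcal{E}_1)\ge n^{-\ln 2}$, the cleanup probability $\Omega(n^{-e(1-\delta)\ln(2e)})$, and the $1-e^{-\Omega(n)}$ heaviness probability gives $\Omega(n^{-(e(1-\delta)\ln(2e)+\ln 2)})$.

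The step I expect to be the main technical obstacle is the last one: controlling the residual surplus count entering the linear window and tracking the single-flip constants tightly enough to recover \emph{exactly} $e(1-\delta)\ln(2e)$ rather than a weaker (or, conversely, wastefully stronger) constant. A naive coupon-collector estimate over the $\Theta(n)$-length window would be super-polynomially small, so the whole argument hinges on the across-positions independence of first-flip times, which alone turns the cleanup probability into a genuine polynomial; reconciling this clean estimate with the interaction between simultaneous flips and the acceptance rule is the delicate bookkeeping I would expect to spend the most care on.
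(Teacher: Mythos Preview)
Your decomposition is close to the paper's but contains a genuine gap and a misattribution of where the exponent $e(1-\delta)\ln(2e)$ comes from. The gap is in your claim that $\delta$-heaviness locks the core from the very first step because ``about half the fringe is excluded initially.'' The definition of $\delta$-heaviness only controls excluded-fringe sets of size at least $\delta(n-k)$; when $\delta>1/2$ (which the hypothesis $\delta\in(1/e,1)$ permits), the initial excluded fringe of size roughly $(n-k)/2$ is too small, and nothing then prevents a single-bit mutation from removing a core vertex early on. The paper handles this not by a structural argument but by brute conditioning: it introduces the event $\mathcal{E}_2$ that no core bit is touched by mutation during the first $\lfloor en(1-\delta)\rfloor$ steps, with $\Pr(\mathcal{E}_2\mid\mathcal{E}_1)\ge\bigl[(1-1/n)^k\bigr]^{\lfloor en(1-\delta)\rfloor}\ge(2e)^{-ek(1-\delta)}\ge n^{-e(1-\delta)\ln(2e)}$, and \emph{this} is the source of the second term in the exponent---not any cleanup over the linear window.

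Consequently the paper's phases run in the opposite order to yours: the linear window comes \emph{first}, and on $\mathcal{E}_1\cap\mathcal{E}_2$ it is used (via the multiplicative-drift tail, with probability at least $1-1/e$) to drive the fringe-in-cover count down far enough that $\delta$-heaviness now structurally locks the core; the remaining $2en\ln n$ steps then finish the fringe removal with probability $1-o(1)$. Your plan to clear a logarithmic residue over the linear window does not reproduce the constant $e(1-\delta)\ln(2e)$, and indeed if your core-locking-from-start claim \emph{were} valid (as it essentially is for $\delta<1/2$), the $2en\ln n$ budget alone would already succeed with probability $1-o(1)$, yielding the stronger bound $\Omega(n^{-\ln 2})$ and making the linear window superfluous---which should have signalled that the second exponent term must originate elsewhere.
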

\begin{proof}
  Since $p$ is sufficiently large, by Lemma~\ref{lem:delta-heavy}, all
  but an $e^{-\Omega(n)}$-fraction of graphs drawn from
  $\model{n}{k}{p}$ are $\delta$-heavy. Thus, we assume for the
  remainder of the proof that $G$ is $\delta$-heavy.

  Let $\mathcal{E}$ be the event that after exactly
  $\lfloor e n (1-\delta) \rfloor$ iterations of the \ea, the
  following conditions hold:
  \begin{enumerate}
  \item The core vertices $C$ belong to the current solution of the
    \ea,
  \item There are at least $\delta n$ fringe vertices that are not
    part of the current solution of the \ea.
  \end{enumerate}
  This is a rather fortunate event for the \ea, because such a
  candidate solution is already a feasible vertex cover (as all
  vertices in $C$ are present), so after this point no infeasible
  covers would be accepted. Moreover, since $G$ is $\delta$-heavy,
  every core vertex is adjacent to at least $\ln n$ uncovered edges
  (by condition (2) above). Thus in order to remove a core vertex $v$
  from the cover, a single mutation operation would need to change at
  least $\ln n$ neighbors of $v$ to remain feasible. In contrast, it
  is always possible to remove any fringe vertex from the current
  cover. Thus if there are $i$ fringe vertices in the current
  solution, the probability to improve the fitness is at least
  $i/(en)$. Furthermore, the probability of flipping at least $\ln n$
  vertices in a single mutation is $n^{-\omega(1)}$.

  Let $\{X_t\}_{t \in \mathbb{N}}$ denote the stochastic process that
  tracks the number of fringe vertices in the cover at time $t$. The
  drift of $\{X_t\}$ conditioned on $\mathcal{E}$ and starting at
  iteration $\lfloor e n (1-\delta) \rfloor$ is at least
  $X_t/en - n^{-\omega(1)} = \Omega(X_t /n)$. By
  Theorem~\ref{thm:multi-drift},
  \[
    \Pr\left(T < 2e n \ln n + \lfloor e n (1-\delta)\rfloor \mid
      \mathcal{E}\right) = 1 - o(1)
  \]
  It remains to bound the probability of $\mathcal{E}$. Let
  $\mathcal{E}_1$ be the event that the initial solution to the \ea
  contains every vertex in $C$ and let $\mathcal{E}_2$ be the event
  that the core vertices in $C$ are not mutated during the first
  $\lfloor e n(1-\delta) \rfloor$ iterations of the \ea. Conditioning
  on $\mathcal{E}_1 \cap \mathcal{E}_2$, the \ea already starts with a
  feasible solution and does not remove any core vertices during the
  first $\lfloor e n(1-\delta) \rfloor$ steps.

  Let $T_1$ be the random variable that measures the number of
  iterations until the first time the number of fringe vertices in the
  cover drops below a $\delta$-fraction.  Again applying tail bounds
  on multiplicative drift, and noting that
  $1+\ln\left(\frac{1}{1-\delta}\right) \ge 1 - \delta$ for constant
  $0 < \delta < 1$, under the condition
  $\mathcal{E}_1 \cap \mathcal{E}_2$, the \ea has reduced the number
  of fringe vertices in the cover from at most $n-k$ to at most
  $\delta(n-k)$ with probability at least $1-1/e$. Applying the law of
  total probability we have
  \begin{align*}
    \Pr(\mathcal{E}) &\ge \Pr(\mathcal{E} \mid \mathcal{E}_1 \cap \mathcal{E}_2) \Pr(\mathcal{E}_1 \cap \mathcal{E}_2) \\
                     &= \Pr(\mathcal{E} \mid \mathcal{E}_1 \cap \mathcal{E}_2) \Pr(\mathcal{E}_2 \mid \mathcal{E}_1) \Pr(\mathcal{E}_1)\\
                     & \ge \left(1 - \frac{1}{e}\right)\cdot \left[\left(1 - \frac{1}{n}\right)^{k}\right]^{\lfloor e n(1-\delta) \rfloor} (1/2)^k\\
                     &\ge \left(1 - 1/e\right)\cdot (2e)^{-ek(1-\delta)} \cdot 2^{-k}\\
                     & \ge \left(1 - 1/e\right) \cdot n^{-(e(1-\delta)\ln(2e) + \ln 2)},
  \end{align*}
  where we have used $k \le \ln n$ in the final inequality.
\end{proof}

\begin{algorithm}
  \KwIn{A fitness function $f \colon \{0,1\}^n \to \mathbb{R}$ and a
    run length $\ell$} $t \gets 0$\; \While{termination criteria not
    met}{\If{$t = 0$}{Choose $x$ uniformly at random from
      $\{0,1\}^n$\; } Create $y$ by flipping each bit of $x$ with
    probability $1/n$\; \lIf{$f(y) \le f(x)$}{$x \gets y$}
    $t \gets (t+1) \bmod \ell$\; } \Return{$x$}\;
  \caption{\label{alg:ea-restart} \ea with cold restarts}
\end{algorithm}

Theorem~\ref{thm:small-k} provides a lower bound on the probability
that a run of length at least
$2en\ln n + \lfloor e n (1-\delta) \rfloor$ finds a $k$-cover of a
random graph with sufficient density. This bound vanishes with $n$,
but slowly enough that a simple cold-restart strategy (periodically
starting over from a randomly generated cover) is guaranteed to be
efficient. This is captured by the following corollary.

\begin{corollary}[to Theorem~\ref{thm:small-k}]
  Consider the $\model{n}{k}{p}$ model with $k \le \ln n$ and
  $0.71 \le p \le 1$. Running the \ea with cold restarts
  (Algorithm~\ref{alg:ea-restart}) with $\ell = 3e n \ln n$ finds a
  $k$-cover on all but an exponentially-fast vanishing fraction of
  graphs in $O(n^{c+1} \log n)$ fitness evaluations where
  $0.73 < c \le e(1+\ln 2)-1 < 3.61 $ is a constant depending on $p$.
\end{corollary}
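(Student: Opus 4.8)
The plan is to derive the corollary from Theorem~\ref{thm:small-k} by a standard independent-restart (geometric waiting-time) argument, the only real work being to translate the success exponent $c = e(1-\delta)\ln(2e)+\ln 2$ of Theorem~\ref{thm:small-k} into a bound expressed through the density $p$. First I would fix the constant $\delta$ as a function of $p$. The hypotheses of Theorem~\ref{thm:small-k} require $\delta \in (1/e,1)$ together with $p > \sqrt{(1-\ln\delta)/2}$, so I would pin $\delta$ just above the threshold value $e^{1-2p^2}$, i.e.\ the value at which $p = \sqrt{(1-\ln\delta)/2}$ holds with equality. For $0.71 \le p \le 1$ this $\delta$ lies in $(1/e,1)$: indeed $\delta < 1$ is exactly the condition $p > 1/\sqrt 2 \approx 0.707$, which is why the threshold $0.71$ appears, and $\delta > 1/e$ holds for $p < 1$ (with $\delta = 1/e$ at $p=1$). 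Thus the hypotheses of both Lemma~\ref{lem:delta-heavy} and Theorem~\ref{thm:small-k} are met.

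With $\delta$ fixed, Lemma~\ref{lem:delta-heavy} guarantees $G$ is $\delta$-heavy except on an $e^{-\Omega(n)}$ fraction of graphs, and I condition on $\delta$-heaviness for the rest of the argument; this supplies the ``all but an exponentially-fast vanishing fraction'' clause. For such a $G$, note that $\ell = 3en\ln n \ge 2en\ln n + \lfloor en(1-\delta)\rfloor$ for all large $n$, since $\lfloor en(1-\delta)\rfloor \le en \le en\ln n$. Hence one phase of Algorithm~\ref{alg:ea-restart} is at least as long as the run length in Theorem~\ref{thm:small-k}, so each phase, begun from an independent uniformly random cover, finds a $k$-cover with probability at least $q = \Omega\bigl(n^{-c}\bigr)$ where $c = e(1-\delta)\ln(2e)+\ln 2$.

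Because cold restarts reinitialize independently and $G$ is fixed, the phase outcomes are independent Bernoulli trials each succeeding with probability $q$, so the number of phases until the first success is geometric with mean $1/q = O(n^{c})$. Each phase costs $\ell = O(n\log n)$ fitness evaluations, so the expected total number of evaluations is $O(n^{c})\cdot O(n\log n) = O(n^{c+1}\log n)$, as claimed. (If a high-probability rather than an expected-time statement is wanted, running $\Theta(n^{c}\log n)$ phases succeeds with probability $1-o(1)$ by the geometric tail, at the cost of one extra logarithmic factor.)

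It remains to bound $c$, which is the main bookkeeping step. Substituting the threshold relation $\delta = e^{1-2p^2}$ gives $c = e\bigl(1-e^{1-2p^2}\bigr)(1+\ln 2)+\ln 2$, a constant depending only on $p$, and $\tfrac{d}{dp}\bigl(1-e^{1-2p^2}\bigr) = 4p\,e^{1-2p^2} > 0$ shows $c$ is increasing in $p$ on $[0.71,1]$. Evaluating the endpoints: at $p=1$ we have $\delta = 1/e$ and $c = (e-1)(1+\ln 2)+\ln 2 = e(1+\ln 2)-1 < 3.61$; at $p = 0.71$ we have $\delta \approx 0.992$, giving $c \approx 0.731 > 0.73$. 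The main obstacle is therefore arithmetic rather than conceptual: one must verify these endpoint evaluations and confirm that the chosen $\delta$ simultaneously meets $\delta \in (1/e,1)$ and the strict inequality $p > \sqrt{(1-\ln\delta)/2}$, which is handled by taking $\delta$ infinitesimally above the threshold (this keeps the inequality strict and only decreases $c$, so the stated bounds on $c$ are respected).
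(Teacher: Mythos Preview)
Your proposal is correct and follows essentially the same route as the paper: set $\delta = e^{1-2p^2}$, invoke Theorem~\ref{thm:small-k} to get a per-phase success probability of $\Omega(n^{-c})$ with $c = e(1-\delta)(1+\ln 2)+\ln 2$, and use the geometric distribution of restarts to conclude $O(n^{c+1}\log n)$ total work. Your treatment is in fact more careful than the paper's on two points the paper glosses over: you explicitly verify that $\ell = 3en\ln n$ dominates the run length $2en\ln n + \lfloor en(1-\delta)\rfloor$ required by Theorem~\ref{thm:small-k}, and you address the boundary issue that $\delta = e^{1-2p^2}$ gives only equality in $p \ge \sqrt{(1-\ln\delta)/2}$ (and $\delta = 1/e$ at $p=1$), handling it by perturbing $\delta$ upward.
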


\begin{proof}
  Let $\delta = e^{1 - 2p^2}$. Since $p > 0.71$, we have
  $\delta \in (1/e,1)$. Thus the conditions for
  Theorem~\ref{thm:small-k} are satisfied, and the success probability
  for an independent run of length $3e n \ln n$ of the \ea is
  $\Omega(n^{-(e(1-\delta)\ln(2e) + \ln 2)}$. Under this condition,
  the number of independent runs until a success is geometrically
  distributed with expectation
  $n^{e(1-\delta)\ln(2e) + \ln 2} = n^{e(1 - e^{1 - 2 p^2}) (1 + \ln
    2) + \ln 2}$, and $c$ can be chosen appropriately.
\end{proof}

\section{Large $k$}
\label{sec:large-k}
We now consider $\model{n}{k}{p}$ in which $k > \ln n$. We will make
use of the following probabilistic bound on the size of independent
sets in the core.

\begin{lemma}
  \label{lem:largest-independent-set}
  Suppose $G$ is drawn from the $\model{n}{k}{p}$ model with
  $k=\omega(1)$. Then with probability $1-o(1)$, the largest
  independent set in $C$ has size at most $(1+2/p)\ln k + 1$.
\end{lemma}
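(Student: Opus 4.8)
The plan is to restrict attention to the subgraph induced on the core and then run a first-moment (union-bound) argument over candidate independent sets. First I would observe that for any two distinct core vertices $u,v \in C$ we have $\{u,v\} \cap C \ne \emptyset$, so by Definition~\ref{def:model} the edge $uv$ is present independently with probability $p$. Hence the induced subgraph $G[C]$ is distributed exactly as an Erd\H{o}s--R\'enyi random graph $G(k,p)$, and the independent sets of $C$ are precisely the independent sets of this random graph. It therefore suffices to bound the independence number of $G(k,p)$.

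Next I would carry out the standard first-moment computation. Let $N_s$ denote the number of independent sets of size $s$ in $G[C]$. A fixed $s$-subset of $C$ is independent exactly when none of its $\binom{s}{2}$ internal pairs is an edge, an event of probability $(1-p)^{\binom{s}{2}}$, so
\[
  \E[N_s] = \binom{k}{s}(1-p)^{\binom{s}{2}} \le k^s\, e^{-p\binom{s}{2}} = \exp\!\left(s\ln k - \frac{p\,s(s-1)}{2}\right),
\]
using $\binom{k}{s}\le k^s$ and $1-p \le e^{-p}$.

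I would then choose $s = (1+2/p)\ln k + 1$ (rounding up to an integer, which only decreases the bound). For this choice $s-1 = (1+2/p)\ln k$, so $\tfrac{p(s-1)}{2} = (1+\tfrac{p}{2})\ln k$ and the exponent above equals $s\big(\ln k - (1+\tfrac{p}{2})\ln k\big) = -\tfrac{p}{2}\,s\ln k$. Since $k=\omega(1)$ we have $\ln k \to \infty$ and $s \to \infty$, so this exponent is $-\Theta((\ln k)^2) \to -\infty$, whence $\E[N_s] = o(1)$. By Markov's inequality, $\Pr(N_s \ge 1) \le \E[N_s] = o(1)$, so with probability $1-o(1)$ the core contains no independent set of size $s$, and therefore the largest independent set in $C$ has size at most $s-1 \le (1+2/p)\ln k + 1$.

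This argument is essentially routine, so I do not anticipate a genuine obstacle; the only point requiring care is calibrating the threshold $s$ so that the linear $s\ln k$ gain from the $\binom{k}{s}$ factor is dominated by the quadratic $\tfrac{p}{2}s^2$ loss from the edge-absence probability. The stated bound sits comfortably above the true independence number (which is near $2\ln k/p$); the extra additive $\ln k$ term is precisely what forces the super-polynomial decay $\exp(-\Theta((\ln k)^2))$ of $\E[N_s]$ and thereby secures the $1-o(1)$ conclusion with margin to spare.
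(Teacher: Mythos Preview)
Your argument is correct and essentially identical to the paper's: both set the threshold at $\ell = \lceil (1+2/p)\ln k + 1\rceil$, bound $\binom{k}{\ell}(1-p)^{\binom{\ell}{2}} \le \exp(\ell\ln k - p\ell(\ell-1)/2)$ using $1-p\le e^{-p}$, substitute to obtain decay of order $e^{-\Theta(\ln^2 k)}$, and apply Markov's inequality. The only differences are expository---your explicit identification of $G[C]$ with $G(k,p)$ and the discussion of why the threshold works are not in the paper, but the computation is the same.
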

\begin{proof}
  Set $\ell \coloneqq \lceil (1+2/p)\ln k + 1\rceil$. There are
  $\binom{k}{\ell}$ size-$\ell$ vertex sets in $C$. We label these
  sets from $1$ to $\binom{k}{\ell}$ and consider a sequence
  $X_1,\ldots,X_{\binom{k}{\ell}}$ of indicator random variables over
  $\model{n}{k}{p}$ where
  \[
    X_i = \begin{cases} 1 & \text{if the $i$-th size-$\ell$ subset of
        $C$ is an
                            independent set in $G$,}\\
            0 & \text{otherwise.}
          \end{cases}
        \]
        Consider the sum $X = X_1 + \cdots + X_{\binom{k}{\ell}}$ and
        note that $X = 0$ if and only if there are no independent sets
        of size $\ell$ or larger in $G$. By Markov's inequality,
        \begin{align*}
          \Pr(X \ge 1) &\le \E[X] = \binom{k}{\ell} (1-p)^{\binom{\ell}{2}} \le  k^{\ell}\left( (1-p)^{(\ell-1)/2} \right)^{\ell}\\
                       & \le \left(\exp\left(\ln k - p(\ell -
                         1)/2\right)\right)^{\ell},\,\text{since $1-p \le e^{-p}$,}\\
                       &= \exp \left(-\left[\left(1+\frac{p}{2}\right)\ln k + \frac{p}{2}\right]\ln k \right)\\
                       & \le e^{-\ln^2 k},
        \end{align*}
        since $p \ge 0$.
      \end{proof}

      \begin{theorem}
        \label{thm:large-k}
        Consider a graph $G$ drawn from the $\model{n}{k}{p}$ model
        with $k > \ln n$. Then with probability $1-o(1)$ (taken over
        the model), the expected runtime of the \ea to find a cover of
        size at most $k$ on $G$ is
        $O{\left(k^{4k\left(1+\frac{1}{p}\right)} n \log n\right)}$.
      \end{theorem}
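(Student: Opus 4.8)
The plan is to condition on the $(1-o(1))$-probability event of Lemma~\ref{lem:largest-independent-set}, under which every independent set contained in the core $C$ has size at most $\ell := (1+2/p)\ln k + 1$, and then to show that after an initial feasibility phase the \ea steadily shrinks its cover, each unit of progress requiring a mutation that flips only $O(\ell)$ bits. By Theorem~\ref{thm:feasible} the \ea reaches a feasible cover within expected $O(n\log n)$ steps, and once the current solution $x$ is feasible its fitness equals $|x|$, so $|x|$ is non-increasing thereafter; it therefore suffices to bound the expected number of further steps until $|x| \le k$. The structural fact driving everything is that for a feasible $x$ the complement $V \setminus x$ is an independent set, so the set of missing core vertices $M := C \setminus x$ is an independent set lying entirely inside the core, whence $m := |M| \le \ell$ under the conditioning.

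I would then exhibit a single family of escape moves valid at every feasible cover with $|x| > k$. Since $|x \cap C| = k - m$, such a cover contains $|x| - (k-m) \ge m+1$ fringe vertices. Consider the mutation that adds all $m$ vertices of $M$ and deletes any $m+1$ fringe vertices of $x$. After $M$ is added the full core is present, so deleting a fringe vertex cannot uncover an edge, and the result is a feasible cover of size $|x|-1$. This mutation flips exactly $2m+1 \le 2\ell+1$ bits, and because the $m+1$ deleted fringe vertices may be chosen in $\binom{|x|-(k-m)}{m+1} \ge 1$ disjoint ways, its probability is at least $\frac{1}{4}\,n^{-(2\ell+1)}$, using $(1-1/n)^{n-(2m+1)} \ge \frac14$.

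Hence from any feasible cover of size exceeding $k$ the expected time to decrease $|x|$ by one is at most $4\,n^{2\ell+1}$, and since at most $n$ decrements are needed the expected time after feasibility is $O(n^{2\ell+2})$. To reach the stated form I would verify the exponent inequality $(2\ell+2)\ln n \le 4k(1+1/p)\ln k$: substituting $\ell = (1+2/p)\ln k + 1$ and using the regime hypothesis $\ln n < k$ reduces it to $2k(\ln k - 2) \ge 0$, valid for all large $k$, so $n^{2\ell+2} \le k^{4k(1+1/p)}$. Adding the $O(n\log n)$ feasibility cost yields the claimed $O(k^{4k(1+1/p)}\,n\log n)$ bound, the extra $n\log n$ factor being slack.

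I expect the main obstacle to be identifying the correct escape move and the quantity controlling its width. The naive repair that directly rebuilds $C$ must delete the entire fringe-neighbourhood of $M$, a mutation of width $\Theta(pn)$ whose waiting time is prohibitive; the key realisation is that one need only make progress one cover-vertex at a time, and that the number $m$ of core vertices ever missing from a \emph{feasible} cover is governed by the independent-set bound $m \le \ell$ rather than by $k$. Everything then rests on the interaction of $\ell = O((1+1/p)\ln k)$ with the hypothesis $k > \ln n$ in the final exponent comparison, so the bulk of the work is structural rather than a delicate drift estimate.
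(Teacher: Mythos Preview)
Your proposal is correct and shares the paper's core structural ideas: condition on the event of Lemma~\ref{lem:largest-independent-set} so that any feasible cover misses at most $\ell=(1+2/p)\ln k+1$ core vertices, reach feasibility via Theorem~\ref{thm:feasible}, and then argue that each improvement only requires a mutation of width $O(\ell)$. The difference lies in how the post-feasibility phase is quantified. The paper sets up the potential $\phi(x)=\max\{0,f(x)-k\}$ and bounds its \emph{multiplicative} drift: in Case~2 it flips the missing core set $Z$ in and an equal-size fringe set $Z'$ out, then invokes deferred decisions to reduce to Case~1, obtaining drift at least $n^{-2|Z|}X_t/(en)$ and hence an $O(n^{4(1+1/p)\ln k}\,n\log n)$ bound directly from Theorem~\ref{thm:multi-drift}. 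You instead exhibit a single explicit escape move (add $M$, delete $m{+}1$ fringe vertices) that always reduces $|x|$ by one, and combine its success probability $\Omega(n^{-(2\ell+1)})$ with at most $n$ fitness levels to get $O(n^{2\ell+2})$, which you then absorb into $k^{4k(1+1/p)}$ via the exponent inequality. Your route is more elementary---no drift theorem or deferred decisions---at the cost of a somewhat coarser additive count, but the slack is harmlessly swallowed by the final bound; the paper's multiplicative-drift argument is what makes the $n\log n$ factor appear intrinsically rather than as leftover slack.
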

      \begin{proof}
        By Theorem~\ref{thm:feasible}, the \ea takes at most
        $\frac{1}{2}(e n \ln n + e n)$ steps in expectation to find a
        feasible solution, after which the \ea never accepts an
        infeasible solution.

        Consider the potential function $\phi(x) = \max\{0,f(x)-k\}$
        and note that when $\phi(x) = 0$, $x$ is a feasible cover of
        size at most $k$. Moreover, $\phi$ cannot increase during the
        run of the \ea.

        By Lemma~\ref{lem:largest-independent-set}, the largest
        independent set in the core of $G$ contains at most
        $(1+\frac{2}{p})\ln k + 1$ vertices with probability $1-o(1)$,
        and we condition on this event for the remainder of the
        proof. Consider the stochastic process
        $(X_t)_{t \in \mathbb{N}}$, which corresponds to the potential
        in the $t$-th iteration.

        We seek to bound the drift of $(X_t)$ after finding a feasible
        solution. Assume that the \ea has already found a feasible
        solution, and let $C$ be the core vertices of $G$. Let $x$ be
        the current solution. We make the following case distinction
        on $x$.
  
        \begin{description}
        \item[Case 1:] $C \cap \{i : x[i] = 0\} = \emptyset$. In this
          case, all of the vertices in $C$ are in the cover described
          by $x$. Thus, any fringe vertex can be removed from the
          current cover and the resulting set is still a cover. A
          particular vertex is removed from the cover with probability
          $(1/n)(1-1/n)^{n-1}$ and there are $f(x) - k$ fringe
          vertices, so the drift in this case is
          \[
            \E[X_t - X_{t+1} \mid X_t] \ge \frac{f(x)-k}{n}\left(1 -
              \frac{1}{n}\right)^{n-1} \ge \frac{X_t}{e n}.
          \]
        \item[Case 2:] $C \cap \{i : x[i] = 0\} \neq \emptyset$. In
          this case, some of the core vertices are not in the cover
          described by $x$. Let $Z \coloneqq C \cap \{x[i] = 0\}$ be
          the set of core vertices that are not in the current
          cover. Note that since $x$ is feasible $Z$ must be an
          independent set in $C$ (otherwise there would be an
          uncovered edge in $C$).

          Let $Z'$ be an arbitrary set of exactly $|Z|$ fringe
          vertices that belong to the current solution $x$, i.e.,
          $Z' \subseteq \{i : x[i] = 1 \} \cap (V \setminus C)$ with
          $|Z'| = |Z|$. Such a $Z'$ must exist, otherwise we would
          have $f(x) < k$.  Let $\mathcal{E}$ denote the event that
          mutation changes all of the zero-bits corresponding to $Z$
          into one-bits, and all of the of one-bits corresponding to
          $Z'$ to zero.  Since each bit is mutated independently, we
          may invoke the principle of deferred
          decisions~\cite{MitzenmacherUpfal2005probability} and assume
          that the choices are first made for the bits in $Z$ and $Z'$
          to produce a partially mutated offspring. Hence, we assume
          that $\mathcal{E}$ has occurred, and consider the random
          choices on the remaining bits corresponding to
          $V \setminus (Z \cap Z')$.  There are
          $f(x) - (k-|Z|) = f(x) - k + |Z|$ fringe vertices in $x$,
          and after removing $|Z'| = |Z|$ fringe vertices, there are
          still $f(x) - k = X_t$ fringe vertices that have not yet
          been considered for mutation, so we may assume that we are
          in Case 1, now with exactly $f(x)-k = X_t$ fringe vertices
          remaining in the cover. Since $X_t - X_{t+1} \ge 0$, by the
          law of total expectation, we can bound the drift from below
          as follows.
          \begin{align*}
            \E[X_t - X_{t+1} \mid X_t] &\ge \E[X_t - X_{t+1} \mid X_t \cap \mathcal{E}]\Pr(\mathcal{E})\\
                                       &\ge n^{-2|Z|} \frac{X_t}{e n},
          \end{align*}
          since $\Pr(\mathcal{E}) = n^{-(|Z|+|Z'|)} = n^{-2|Z|}$.
        \end{description}
        In either case, the drift is at least
        $n^{-2|Z|} \frac{X_t}{e n}$, but we have assumed via
        Lemma~\ref{lem:largest-independent-set} that
        $|Z| \le (1+\frac{2}{p})\ln k + 1 < 2(1+1/p)\ln k$ for
        sufficiently large $n$ (and hence $k$, as $k \ge \ln
        n$). Therefore, by the multiplicative drift theorem, the
        expected time until a $k$-cover is found is at most

\begin{align*}
  O(n^{4(1+1/p)\ln k} n \log n) &= O(k^{4(1+1/p)\ln n} n \log n)\\
                                &= O\left(k^{4k\left(1+\frac{1}{p}\right)} n \log n\right),
\end{align*}  
since $\ln n < k$.
\end{proof}

\section{Computational Experiments}
\label{sec:experiments}

To fill in the gaps left open by the previous sections, we report here
on a number of experiments that investigate the relationship between
the parameters of the planted vertex cover problem. For each
experiment, we sample from the $\model{n}{k}{p}$ model by constructing
a random graph on $n$ vertices choosing each edge with probability $p$
as long as at least one incident vertex is in the set
$\{1,\ldots,k\}$. After this, we run the standard \ea
(Algorithm~\ref{alg:ea}) until $f(x) \le k$. For each setting of $n$,
$k$, $p$, we run the algorithm for 100 trials (but sample a new graph
from $\model{n}{k}{p}$ each time.

To better understand how the runtime depends on $n$ on dense graphs in
which $k$ is a small function of $n$, we plot the average runtime,
varying $n = 100, \ldots,1000$ and fixing $p=0.5$. This is plotted in
Figure~\ref{fig:variedn-dense}, where we observe a stable runtime
varying almost linearly with $n$. In Figure~\ref{fig:variedn-sparse},
we show the same data for runs where $p$ is also varied with $n$,
i.e., $p=1/n$. This corresponds to much sparser graphs, and we see
that the runtime has much higher variability, especially for slower
growing $k$.

\begin{figure}
  \centering
  \begin{subfigure}[t]{0.495\textwidth}
    \includegraphics[width=\linewidth]{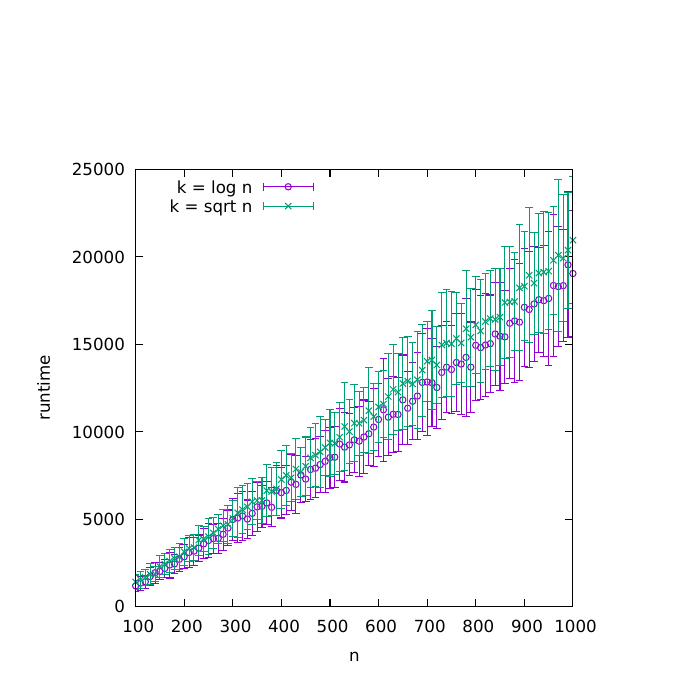}
    \caption{Dense regime ($p=0.5$).}
    \label{fig:variedn-dense}
  \end{subfigure}
  \begin{subfigure}[t]{0.495\textwidth}
    \centering
    \includegraphics[width=\linewidth]{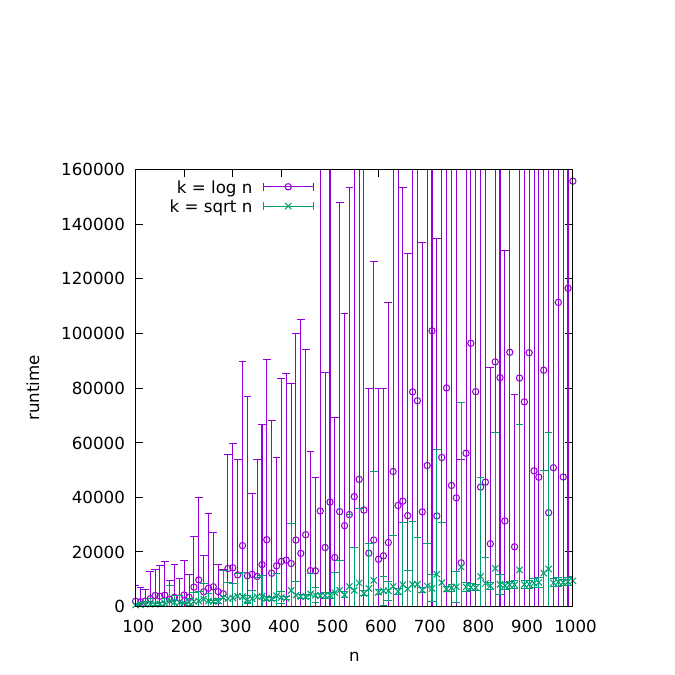}
    \caption{Sparse regime ($p=1/n$).}
    \label{fig:variedn-sparse}
  \end{subfigure}
  \caption{Runtime dependence on $n$ for $k=\ln n$ and $k=
    \sqrt{n}$. Error bars denote standard deviation.}

\end{figure}

This scaling behavior is not so surprising, as we expect that random
planted graphs are particularly easy for the \ea. Similar to the case
of random planted
satisfiability~\cite{DoerrEtAl2017TimeComplexityAnalysis}, the
relatively uniform structure of the problem is likely to provide a
good fitness signal for hill-climbing type algorithms.

Random distributions of problems often undergo a so-called phase
transition as various system parameters are varied. Very often,
problems sampled near a critical density tend to be (empirically)
harder to solve by different algorithms. For example, empirical
evidence suggests critically-constrained planted propositional
satisfiability formulas are difficult for the \ea when they are
sampled near a critical
density~\cite{DoerrEtAl2017TimeComplexityAnalysis}. To study the
performance of the \ea on $\model{n}{k}{p}$ as a function of graph
density, we plot the dependence of the average runtime on $p$ in
Figures~\ref{fig:n1000-runtime-p} and~\ref{fig:n200-runtime-p},
holding $n$ fixed and averaging over all values of $k$. We also see in
this case a dependence on graph density in which the \ea performs
worse in a band of not-too-sparse but not-too-dense graphs.

\begin{figure}[t!]
  \centering
  \begin{subfigure}[t]{0.495\textwidth}
    \includegraphics[width=\linewidth]{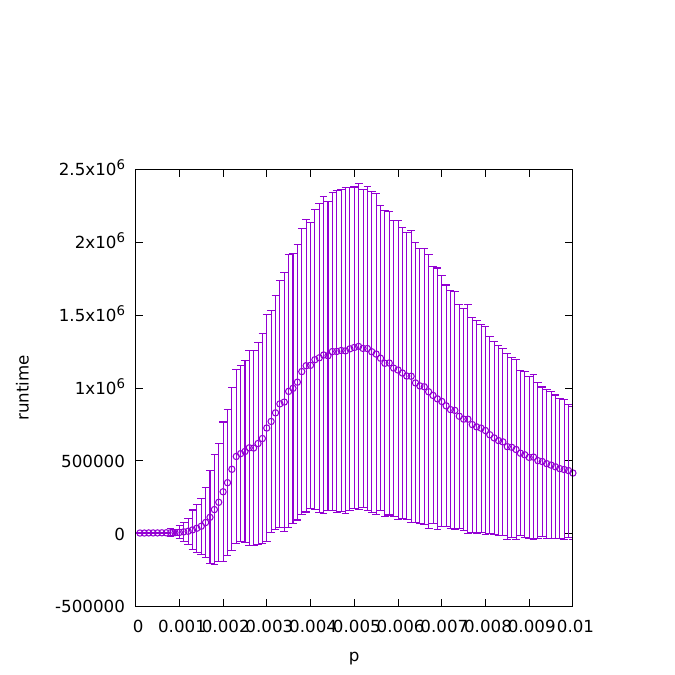}
    \caption{$n=1000$}
    \label{fig:n1000-runtime-p}
  \end{subfigure}
  \begin{subfigure}[t]{0.495\textwidth}
    \centering
    \includegraphics[width=\linewidth]{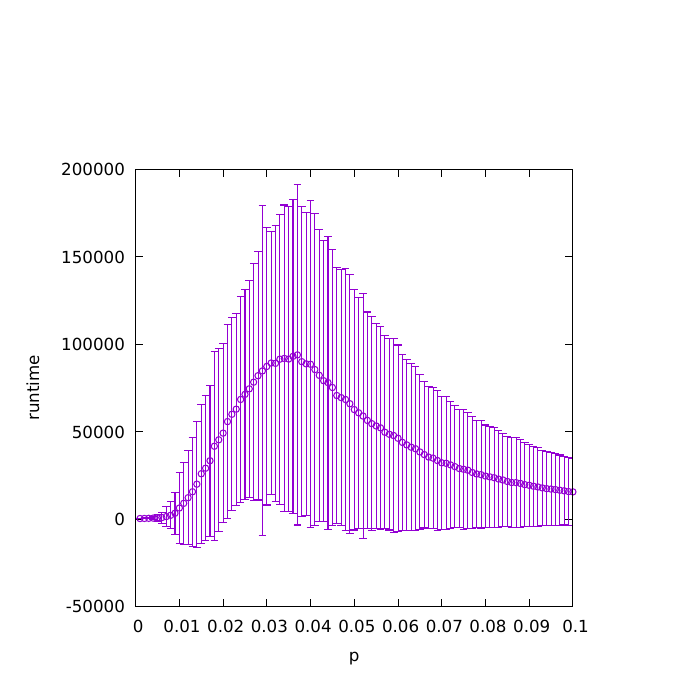}
    \caption{$n=200$}
    \label{fig:n200-runtime-p}
  \end{subfigure}
  \caption{Runtime dependence on $p$ for fixed $n$ varying
    $k=10,\ldots,100$. Error bars denote standard deviation.}
\end{figure}

The dependence of runtime on $k$, however, is more uniform as we can
see in Figure~\ref{fig:runtime-k}. Here we have aggregated over all
$p$ values, which likely explains the large variance, especially in
the larger $n=1000$ problems.

\begin{figure}
  \centering \includegraphics[width=0.495\textwidth]{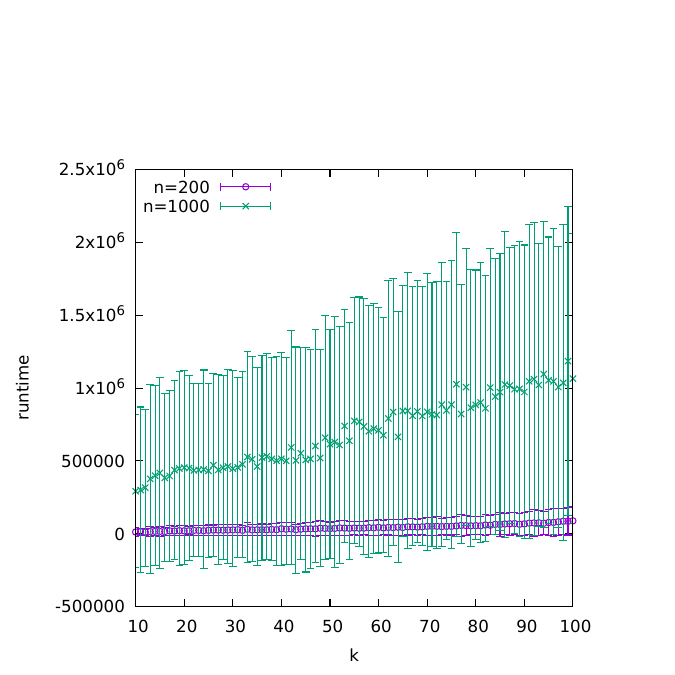}
  \caption{Runtime dependence on $k$ ($p$ aggregated). Error bars
    denote standard deviation.}
  \label{fig:runtime-k}
\end{figure}

A more detailed picture is provided by
Figures~\ref{fig:n200-runtime-kp} and~\ref{fig:n1000-runtime-kp},
where we display two-dimensional color plots showing the runtime
dependence on both $k$ and $p$ simultaneously. On these plots one can
see how the density and the cover size influences the efficiency of
the \ea. We conjecture that there is a critical value (or range) of
$p$ at which the \ea struggles to find a $k$-cover.

\begin{figure}
  \centering
  \begin{subfigure}[t]{0.495\textwidth}
    \includegraphics[width=\linewidth]{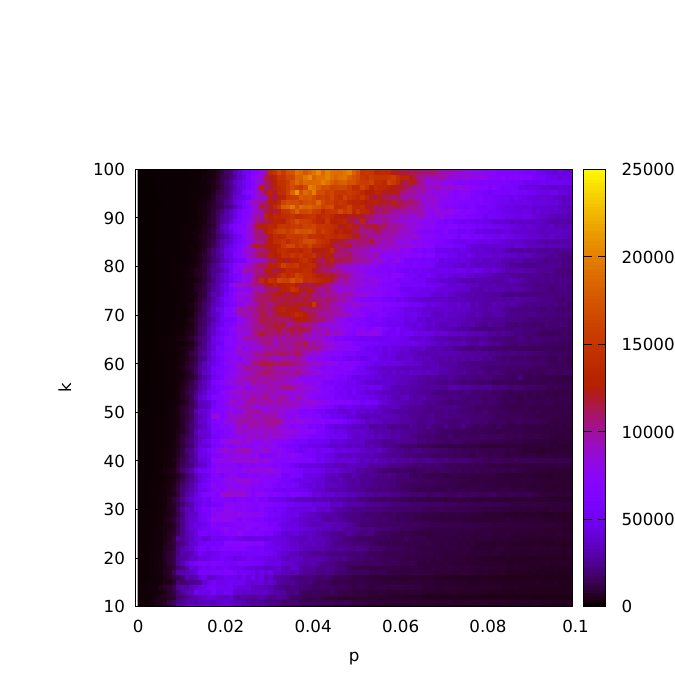}
    \caption{$n=200$}
    \label{fig:n200-runtime-kp}
  \end{subfigure}
  \begin{subfigure}[t]{0.495\textwidth}
    \centering
    \includegraphics[width=\linewidth]{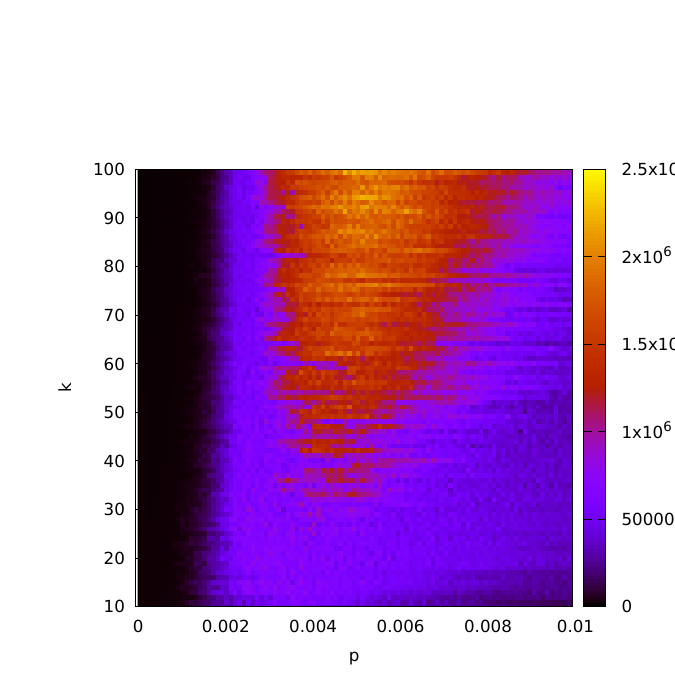}
    \caption{$n=1000$}
    \label{fig:n1000-runtime-kp}
  \end{subfigure}
  \caption{Runtime dependence on both $k$ and $p$ for fixed $n$.}
\end{figure}

The \ea completes execution as soon as it finds a $k$-cover. However,
this is not necessarily guaranteed to be the $k$-cover that was
planted in the graph. Indeed, for smaller densities, we would expect
many other $k$-covers in the graph. To investigate this, in
Figure~\ref{fig:core-p} we plot the proportion of runs in which the
planted $k$-core was recovered (as opposed to some different
$k$-cover) as a function of $p$. The dependence of this characteristic
as a function of $k$ is plotted in Figure~\ref{fig:core-k}, and
Figures~\ref{fig:n200-core-kp} and~\ref{fig:n1000-core-kp} display
this in a color plot for both $k$ and $p$ simultaneously.

\begin{figure}
  \begin{subfigure}[t]{0.495\textwidth}
    \centering \includegraphics[width=\linewidth]{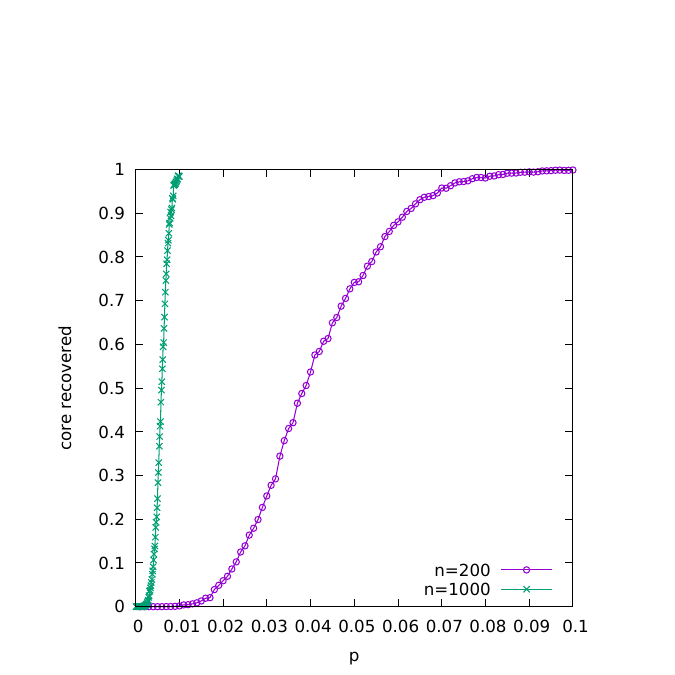}
    \caption{$k$-core recovered as a a function of $p$.}
    \label{fig:core-p}
  \end{subfigure}
  \begin{subfigure}[t]{0.495\textwidth}
    \centering \includegraphics[width=\linewidth]{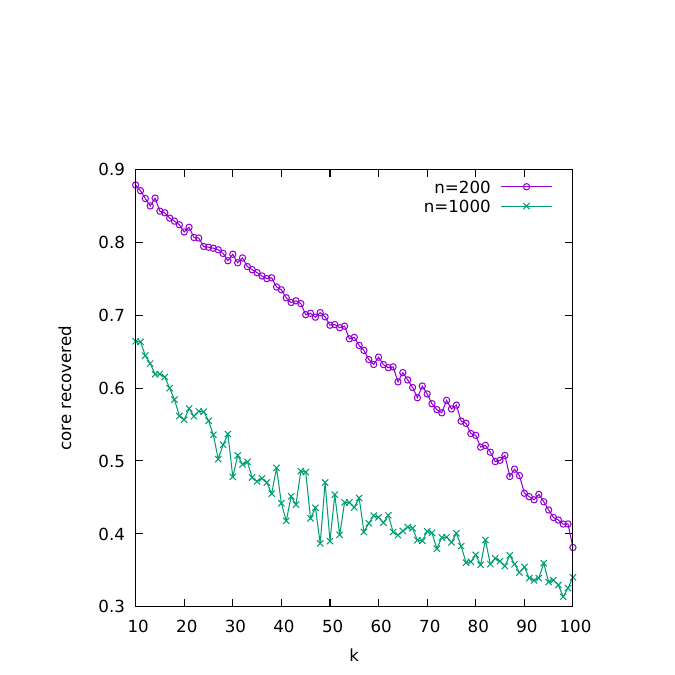}
    \caption{$k$-core recovered as a function of $k$.}
    \label{fig:core-k}
  \end{subfigure}

\begin{subfigure}[t]{0.495\textwidth}
  \centering \includegraphics[width=\linewidth]{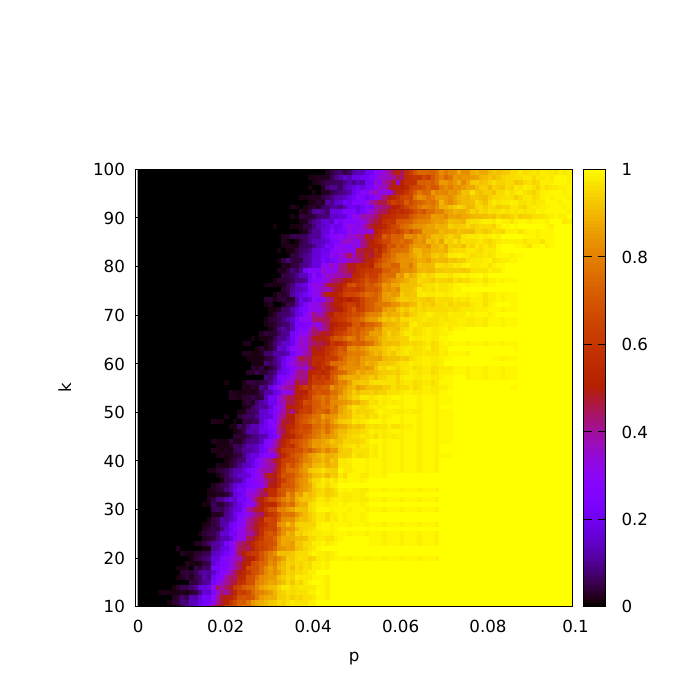}
  \caption{$k$-core recovered as a function of $k$ and $p$ ($n=200$).}
  \label{fig:n200-core-kp}
\end{subfigure}
\begin{subfigure}[t]{0.495\textwidth}
  \centering
  \includegraphics[width=\linewidth]{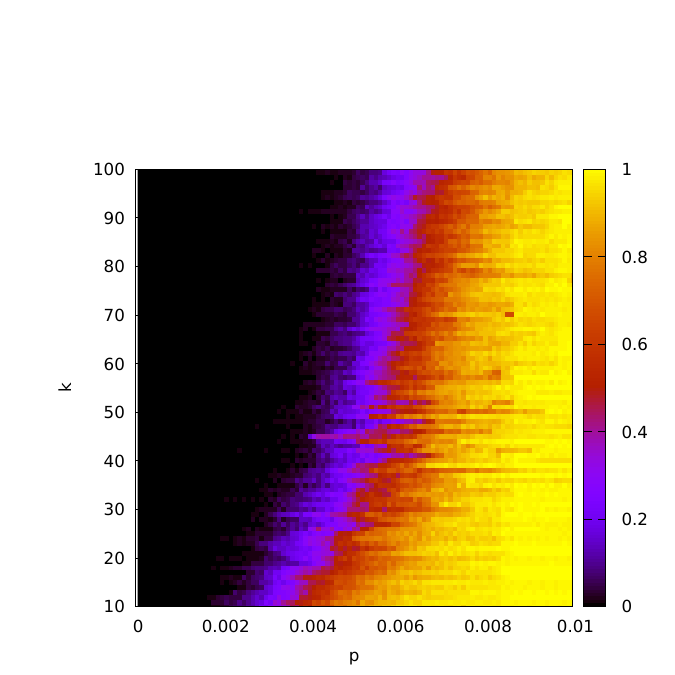}
  \caption{$k$-core recovered as a function of $k$ and $p$
    ($n=1000$).}
  \label{fig:n1000-core-kp}
\end{subfigure}
\caption{Proportion of runs in which the planted $k$-core was
  recovered.}
\end{figure}

When the graph is relatively sparse, we would also expect the \ea to
``overshoot'' $k$ by finding an even smaller cover before finding a
$k$-cover. To understand better how this depends on $k$ and $p$, we
plot the average difference between $k$ and the best fitness found as
a function of $p$ on sparse ($p=1/n$) instances where $n$ is varied in
Figure~\ref{fig:variedn-sparse-kdelta}, and on fixed-$n$ instances in
Figures~\ref{fig:n1000-kdelta} and~\ref{fig:n200-kdelta}.

\begin{figure}
  \centering
  \begin{subfigure}[t]{0.495\textwidth}
    \centering
    \includegraphics[width=\linewidth]{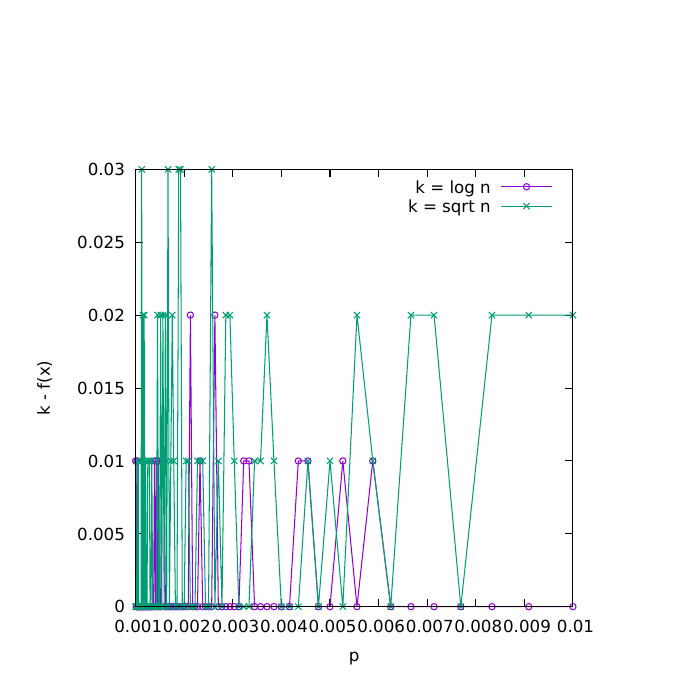}
    \caption{$n = 100, \ldots,1000$}
    \label{fig:variedn-sparse-kdelta}
  \end{subfigure}
  \begin{subfigure}[t]{0.495\textwidth}
    \centering
    \includegraphics[width=\linewidth]{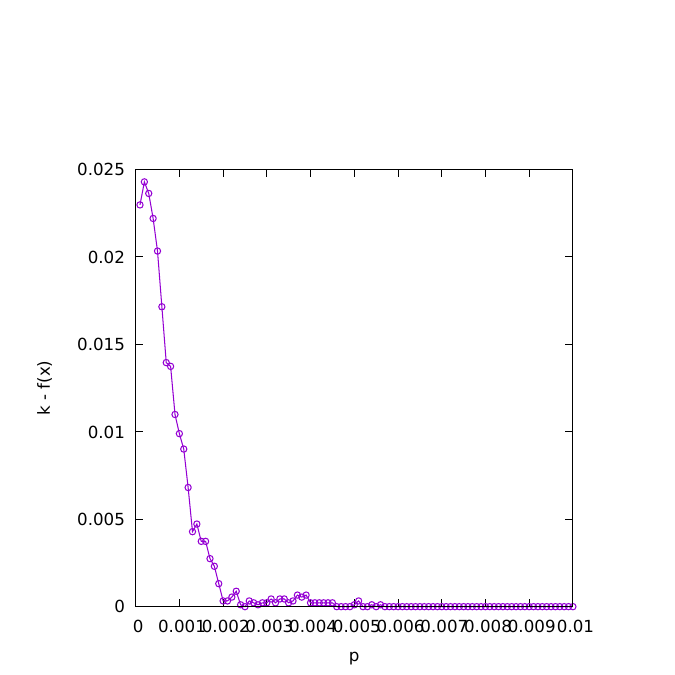}
    \caption{$n=1000$}
    \label{fig:n1000-kdelta}
  \end{subfigure}
  \begin{subfigure}[t]{0.495\textwidth}
    \centering
    \includegraphics[width=\linewidth]{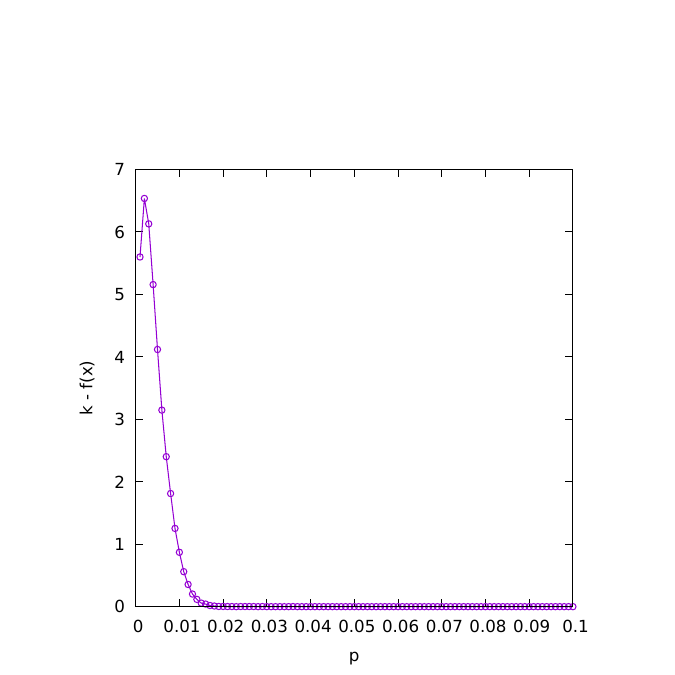}
    \caption{$n=200$}
    \label{fig:n200-kdelta}
  \end{subfigure}
  \caption{Average difference between $k$ and best fitness found as a
    function of $p$.}
\end{figure}

\section{Conclusion}
In this paper we have presented a parameterized analysis the \ea on
problems drawn from the $\model{n}{p}{k}$ random planted vertex cover
model. We showed that for dense graphs $(p > 0.71)$ and small $k$,
there is sufficient signal in enough of the space so that the \ea has
a relatively good chance of finding a $k$-cover in a polynomial-length
run. When $k$ is large, we showed that a feasible cover cannot leave
too much of the planted core uncovered, and therefore the \ea does not
require a large effort to make progress. In the end, this translates
to a fixed-parameter tractable runtime for the \ea with high
probability over $\model{n}{p}{k}$.

To fill in the picture, we also reported a number of computational
experiments that measure the runtime on graphs drawn from
$\model{n}{p}{k}$. These experiments point to a critical value for $p$
at which the \ea requires more time to find any $k$-cover, which
suggest an interesting direction for future theoretical work to
understand this phenomenon better.

\label{sec:conclusion}

\section*{Acknowledgements}
This work was supported by the National Science Foundation under grant
2144080 and by the Australian Research Council under grant
FT200100536.

\newcommand{\etalchar}[1]{$^{#1}$}

\end{document}